\documentclass[twoside]{article}
\usepackage[preprint]{aistats2027}

\usepackage[american]{babel}

\usepackage[round]{natbib}
\renewcommand{\refname}{References}
\usepackage{mathtools}
\usepackage{amsfonts}
\usepackage{amsmath}
\usepackage{amsthm}
\usepackage{amssymb}
\usepackage{thmtools,thm-restate}
\usepackage{nicefrac}

\usepackage{tikz}
\usetikzlibrary{arrows.meta, positioning, calc, decorations.pathmorphing,
                 fit, backgrounds, shapes.geometric, patterns, chains}
\usepackage{graphicx}
\usepackage{subcaption}
\usepackage{float}

\usepackage{booktabs}
\usepackage{array}   
\usepackage{longtable}
\usepackage{multirow}

\usepackage{xcolor}
\usepackage{url}
\usepackage{hyperref}
\hypersetup{hidelinks}
\usepackage{cleveref}
\usepackage[shortlabels]{enumitem}
\usepackage{algorithm}
\usepackage{algorithmic}

\crefname{section}{\S}{\S\S}
\crefname{subsection}{\S}{\S\S}
\crefname{figure}{Fig.}{Figs.}
\crefname{prop}{Prop.}{Props.}
\crefname{theorem}{Thm.}{Thms.}
\crefname{definition}{Defn.}{Defns.}
\crefname{cor}{Corollary}{Corollaries}
\crefname{lem}{Lem.}{Lems.}
\crefname{table}{Tab.}{Tabs.}
\crefname{assum}{Assum.}{Assums.}
\crefname{example}{Ex.}{Exs.}
\crefname{appendix}{Appx.}{Appxs.}

\theoremstyle{plain}

\newtheorem{theorem}{Theorem}
\newtheorem{cor}{Corollary}
\theoremstyle{definition}
\newtheorem{assum}{Assumption}
\newtheorem{definition}{Definition}

\theoremstyle{remark}

\newcommand{\Val}{\mathrm{Val}}
\newcommand{\Solve}{\mathrm{Solve}}
\newcommand{\Proj}{\mathrm{Proj}}
\newcommand{\Pa}{\mathrm{Pa}}
\newcommand{\doop}{\mathrm{do}}
\newcommand{\dom}{\mathrm{dom}}
\newcommand{\id}{\mathrm{id}}
\newcommand{\Diag}{\mathrm{Diag}}

\definecolor{blockblue}{HTML}{0072B2}
\definecolor{blockpink}{HTML}{A64D79}

\begin{document}
\runningtitle{When Do Learned Blocks Preserve Causal Meaning?}
\runningauthor{Zaghw, Brendel, Reizinger}
\twocolumn[
\aistatstitle{When Do Learned Blocks Preserve Causal Meaning?\\ Block Identifiability and Constructive Causal Abstractions}
\aistatsauthor{Karim Zaghw$^{1,2,3}$ \quad Wieland Brendel$^{2,3,\dagger}$ \quad Patrik Reizinger$^{2,3,\dagger}$}
\aistatsaddress{$^1$University of Tübingen \quad $^2$ELLIS Institute Tübingen \\
$^3$Max Planck Institute for Intelligent Systems, Tübingen, Germany \quad $^\dagger$Joint senior authors} ]

\begin{abstract}
Standard identifiability theory assumes one-dimensional latent factors, but learned representations can encode latent information across multidimensional blocks whose internal coordinates are not uniquely determined. When such blocks are used to represent and intervene on higher-level concepts, it is unclear whether their causal meaning survives the transformations left unresolved by block-identifiability. We connect block-identifiability to constructive causal abstraction. We show that block-diagonal invertible transformations followed by compatible coordinate permutations form the maximal class that preserves every constructive causal abstraction built on those blocks. This guarantee assumes that the correspondence between blocks and concepts is updated with the representation. A small synthetic experiment with additive autoencoders shows that high-level interventions can be transferred between independently learned representations while approximately preserving their intended effects.

\end{abstract}

\section{Introduction}\label{sec:intro}

Identifiability theory asks whether a learned representation recovers the latent factors that generated the observations~\citep{hyvarinen_nonlinear_2019,khemakhem_variational_2020}.
Many standard results assume that the learned representation has the same dimensionality as the latent space, with one learned coordinate corresponding to each latent factor.
Under the assumptions of these results, the factors can often be recovered up to a permutation and a separate invertible coordinate-wise transformation.
Because these transformations do not mix the factors, additional assumptions can still allow their causal relations to be recovered~\citep{shimizu_linear_2006,reizinger_jacobian-based_2023}.

In practice, a concept may be encoded across a block of learned coordinates rather than a single coordinate. Several settings motivate this block-level view. Sparse autoencoders (SAEs) can split one concept across multiple learned features~\citep{bricken_towards_2023,mueller_isolation_2025}, while block-sparse featurizers explicitly learn groups of activation directions and recover visual concepts that are typically two- to four-dimensional~\citep{fel_structuring_2026}. Object-centric models represent objects using vector-valued slots~\citep{locatello_object-centric_2020}. Formal block-identifiability results arise in independent subspace analysis (ISA), which recovers independent subspaces up to permutation and invertible within-subspace transformations~\citep{hyvarinen_hoyer_2000,theis_towards_2006}, and in additive decoders, which identify latent blocks up to permutation and block-wise invertible transformations under their assumptions~\citep{lachapelle_additive_2023}. These results recover each block as a whole while leaving its internal coordinates undetermined. Different learned models may therefore represent the same block using different internal coordinates. While practitioners intervene on such blocks by steering feature groups, manipulating object slots, or clamping latent subspaces, block-identifiability does not tell us whether these interventions keep the same high-level causal meaning across different representations. 

We use constructive causal abstractions~\citep{beckers_abstracting_2019,geiger_causal_2023} to state this question precisely. We treat the coordinates of the learned representation as variables in a causal model, which we call the low-level model. The human-interpretable concepts form another causal model, which we call the high-level model. A constructive causal abstraction maps states and interventions in the low-level model to corresponding states and interventions in the high-level model, with each high-level concept computed from one block of learned coordinates. We ask which changes to the learned representation preserve this mapping.

Our contributions are:
\begin{enumerate}[leftmargin=*]
    \item \textbf{We identify the largest class of transformations guaranteed to preserve causal abstractions built from learned blocks.}
    This class consists of invertible transformations within blocks followed by coordinate permutations that carry the block-to-concept correspondence with them.
    It includes the transformations allowed by block-identifiability guarantees for ISA and additive decoders.
    We prove that every transformation in this class preserves every abstraction of this form, while every transformation outside it fails to preserve at least one (\cref{thm:maximality}).

    \item \textbf{We illustrate high-level steering and its transfer between learned representations.}
    In a small synthetic experiment with independently trained additive autoencoders, we translate high-level interventions into low-level block edits.
    We transfer these edits between models using fitted blockwise maps and show that their intended high-level effects are approximately preserved (\cref{sec:experiments}).
\end{enumerate}

\section{Background}\label{sec:background}

\paragraph{Notation.}
Following \citet{geiger_causal_2023}, capital letters denote variables and lowercase letters their values, and boldface denotes a set of variables or a joint setting of one, so that $x\in\Val_X$ and $\mathbf x\in\Val_{\mathbf V_L}$.
Let $\mathbf{S}=\{S_1,\dots,S_m\}$ be the ground-truth causal factors: each $S_i$ is produced by a mechanism $F_{S_i}$ from its parents $\Pa(S_i)$ in the ground-truth directed acyclic graph (DAG) and an exogenous noise variable $N_i$, written $S_i:=F_{S_i}(\Pa(S_i),N_i)$.
The map $f$ takes a factor setting $\mathbf s$ to the activation vector $\mathbf o=f(\mathbf s)$ supplied to the featurizer $g$; this includes generating the network input and passing it through the network. The featurizer returns the learned feature vector, or representation, $\mathbf x=g(\mathbf o)$.
We model the coordinates of $\mathbf x$ as variables of the representation-level SCM $L$. They form the finite set $\mathbf V_L$, so a learned \emph{feature} is a variable $X\in\mathbf V_L$ with value set $\Val_X$, and $\mathbf x\in\Val_{\mathbf V_L}$ is a total setting of $L$.
The high-level SCM $H$ has variables $\mathbf V_H=\{Z^{(1)},\dots,Z^{(B)}\}$, where each $Z^{(b)}$ is a human-interpretable \emph{concept} chosen for intervention and may be computed from a block of learned features rather than from one coordinate.
The ground-truth factors $\mathbf S$ generate the activation vector $\mathbf o$, whereas $\mathbf V_H$ contains the concepts abstracted from the learned representation; we do not assume that they coincide.
For a subset $\mathbf{U}\subseteq\mathbf{V}_L$, write $\Val_{\mathbf{U}}:=\prod_{U\in\mathbf{U}}\Val_{U}$, and the projection $\Proj_{\mathbf{U}}:\Val_{\mathbf{V}_L}\to \Val_{\mathbf{U}}$ restricts a setting to $\mathbf{U}$.
\cref{tab:terminology} in \cref{app:terminology} summarizes this notation. \Cref{fig:pipeline} illustrates this conceptual pipeline.

\begin{figure*}[t]
    \centering
    \includegraphics[width=0.95\textwidth]{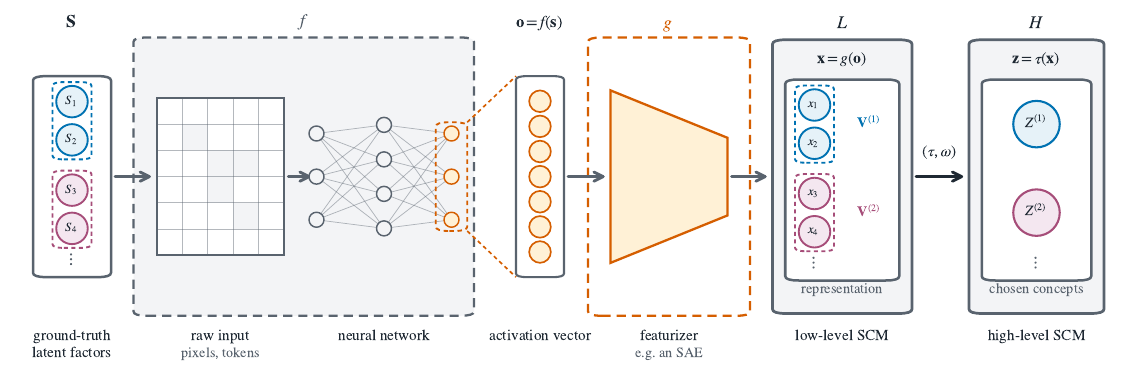}
    \caption{\textbf{From ground-truth factors to a constructive causal abstraction of a learned representation.} The ground-truth causal factors $\mathbf S$ take values $\mathbf s$, which determine the network input and the resulting activation vector $\mathbf o=f(\mathbf s)$. A featurizer $g$, such as a sparse autoencoder (SAE), maps these activations to the learned representation $\mathbf x=g(\mathbf o)$. We treat the coordinates of $\mathbf x$ as variables of the low-level structural causal model (SCM) $L$ and partition them into blocks $\mathbf V^{(b)}$. The constructive causal abstraction $(\tau,\omega)$ maps low-level states and complete-block interventions to states and interventions in the high-level SCM $H$, where each human-interpretable concept $Z^{(b)}$ is computed from its corresponding block. Blue and pink indicate this block-to-concept correspondence. The ground-truth factors $\mathbf S$ generate the activations but need not coincide with the concepts in $H$.}
    \label{fig:pipeline}
\end{figure*}

To compare interventions across levels, we first need a common description of causal models, their solutions, and the interventions applied to them.

\begin{definition}[SCMs and interventions]\label{def:scm}
A \emph{deterministic SCM} is a pair $M=(\Sigma,\{F_X\}_{X\in\mathbf V})$, where $\Sigma=(\mathbf V,\Val)$ is a signature and $F_X:\Val_{\mathbf V}\to\Val_X$ is the mechanism for each $X\in\mathbf V$.
A setting $\mathbf v\in\Val_{\mathbf V}$ is a solution of $M$ if $F_X(\mathbf v)=\Proj_{\{X\}}(\mathbf v)$ for every $X\in\mathbf V$; $\Solve(M)$ denotes the set of all such solutions.
A \emph{hard intervention} on $\mathbf{U}\subseteq\mathbf{V}$ replaces $\{F_X\}_{X\in\mathbf{U}}$ by constants; we identify it with the idempotent map on $\Val_{\mathbf V}$ that overwrites the coordinates in $\mathbf U$, and the empty intervention with the identity map.
A set $\Psi$ of interventionals forms an \emph{intervention algebra} when its composition algebra is isomorphic to that of hard interventions on some signature~\citep{geiger_causal_2023}.
Hard interventions are partially ordered by extension: $I\sqsubseteq I'$ when $I'$ sets every variable $I$ sets, to the same value.
\end{definition}

To say that $L$ and $H$ make consistent causal predictions, mapping an intervention from $L$ to $H$ should give the same high-level outcomes as mapping the resulting low-level solutions to $H$. Exact transformations formalize this requirement~\citep{rubenstein_causal_2017,geiger_causal_2023}.

\begin{definition}[Exact transformation]\label{def:exact}
An \emph{exact transformation} $(\tau,\omega):L\to H$ consists of a surjective state map $\tau:\Val_{\mathbf{V}_L}\to\Val_{\mathbf{V}_H}$ and a surjective order-preserving intervention map $\omega$ such that for all $I\in\dom(\omega)$
\begin{equation}\label{eq:exact}
    \tau\big(\Solve(L^{I})\big)=\Solve\big(H^{\omega(I)}\big)
\end{equation}
\end{definition}

To ask whether a causal abstraction survives a change of representation, we must express the low-level states, mechanisms, and interventions in the new coordinates. Bijective translations formalize this consistent reparameterization~\citep{geiger_causal_2023}.

\begin{definition}[Bijective translation]\label{def:bijective}
Let $M$ be a deterministic SCM, and write $F_{\mathbf V}(\mathbf v):=(F_X(\mathbf v))_{X\in\mathbf V}$ for the map collecting all of its mechanisms. For a bijection $T:\Val_{\mathbf V}\to\Val_{\mathbf V}$, the \emph{bijective translation} $M_T$ has mechanism map
\begin{equation}
    F_{\mathbf V}^T:=T\circ F_{\mathbf V}\circ T^{-1}.
\end{equation}
An intervention $I$ is transported to
\begin{equation}
    I^T:=T\circ I\circ T^{-1}.
\end{equation}
\end{definition}

For any intervention algebra $\Psi$ on $M$, the transported interventions
$\Psi^T:=\{I^T:I\in\Psi\}$ form an intervention algebra on $M_T$, ordered by
$I^T\sqsubseteq J^T$ exactly when $I\sqsubseteq J$.
Every intervention $I$ on $M$ is therefore paired with the transported
intervention $I^T$ on $M_T$. Under this pairing,
\[
    T\bigl(\Solve(M^I)\bigr)=\Solve(M_T^{I^T}),
\]
so the state map $T$ and intervention map $I\mapsto I^T$ form an exact
transformation from $M$ to $M_T$.
Bijective translations compose with other exact transformations~\citep{geiger_causal_2023}.
This composition allows an abstraction from $L$ to $H$ to be carried
over to a reparameterized model $L_T$.

Exactness tells us that the low- and high-level models agree under corresponding interventions, but it does not say where each high-level variable is represented in the low-level model. Our question requires this additional structure because we want each concept to be computed from a particular block of learned coordinates and interventions on that block to correspond to interventions on the concept. Constructive causal abstractions formalize this block-to-concept correspondence through an alignment~\citep{beckers_abstracting_2019}.

\begin{definition}[Alignment and constructive causal abstraction]\label{def:constructive}
An \emph{alignment} between the low-level variables $\mathbf V_L$ and
the high-level variables $\mathbf V_H$ is a pair
$\langle\Pi,\pi\rangle$, where
$\Pi=\{\Pi_Z:Z\in\mathbf V_H\}\cup\{\Pi_\bot\}$ is a partition of
$\mathbf V_L$, and $\pi=\{\pi_Z\}_{Z\in\mathbf V_H}$ is a family of
surjective \emph{cell maps}
$\pi_Z:\Val_{\Pi_Z}\to\Val_Z$.
The cell $\Pi_Z$ contains the low-level variables associated with $Z$,
while the optional cell $\Pi_\bot$ contains variables that are not mapped
to any high-level variable.

The alignment $\langle\Pi,\pi\rangle$ induces the state map
\[
    \tau_\pi(\mathbf x)
    :=\bigl(\pi_Z(\Proj_{\Pi_Z}(\mathbf x))\bigr)_{Z\in\mathbf V_H}
\]
and maps interventions on complete cells according to
\[
    \omega_\pi\bigl(
        \doop(\Pi_Z{=}\mathbf i_Z,\ Z\in A)
    \bigr)
    :=
    \doop\bigl(
        Z{=}\pi_Z(\mathbf i_Z),\ Z\in A
    \bigr),
\]
where $A\subseteq\mathbf V_H$ and
$\mathbf i_Z\in\Val_{\Pi_Z}$.
The high-level model $H$ is a \emph{constructive causal abstraction}
of $L$ under this alignment if $(\tau_\pi,\omega_\pi):L\to H$ is an
exact transformation.
\end{definition}

For a cell map $\pi_Z$, the \emph{fiber} over $z\in\Val_Z$ is $\pi_Z^{-1}(z)$, the set of low-level cell values mapped to $z$. If $\pi_Z$ is many-to-one, values in the same fiber correspond to the same high-level value and therefore cannot be distinguished in $H$.

\section{Preserving Constructive Causal Abstractions}
\label{sec:theory}

We show that constructive causal abstractions are invariant under block-diagonal reparameterizations and that no larger class of reparameterizations preserves all of them, connecting block-identifiability to causal abstraction theory.

\subsection{Setup}\label{sec:setup}

Our aim is to determine whether the ambiguity left by block-identifiability changes the causal meaning assigned to learned blocks. We therefore assume that each learned block serves as the alignment cell for one high-level variable and consider interventions on complete blocks.

\begin{assum}[Block-aligned abstraction]\label{ass:setup}
\leavevmode\par
\begin{enumerate}[label=(\roman*),leftmargin=*]
\item $L=(\Sigma_L,\{F_X\}_{X\in\mathbf V_L})$ is a deterministic SCM with block partition
\begin{equation}\label{eq:blocks}
    \mathbf V_L=\bigsqcup_{b=1}^B\mathbf V^{(b)}.
\end{equation}

\item $(\tau,\omega):L\to H$ is constructive, with each block
$\mathbf V^{(b)}$ serving as the alignment cell for $Z^{(b)}$.
We write
$\pi_b:\Val_{\mathbf V^{(b)}}\to\Val_{Z^{(b)}}$
for the corresponding cell map, for $b=1,\dots,B$.

\item $\Psi$ is the intervention algebra of all hard interventions on zero or more complete blocks, $\dom(\omega)=\Psi$, and $\omega$ is the intervention map induced by the alignment as in \cref{def:constructive}.
\end{enumerate}
\end{assum}

Block-identifiability recovers each block as a whole but leaves its individual coordinates undetermined. We therefore focus on interventions on complete blocks because the corresponding high-level concept can depend on every coordinate in the block. To test whether changes to the internal coordinates affect the block's causal meaning, we collect all invertible within-block coordinate changes in the family $\mathcal D$.

\begin{definition}[Block reparameterizations]\label{def:reparam}
The group of block-diagonal reparameterizations is
\begin{equation}
    \begin{aligned}
        \mathcal D:=\bigl\{\,&\Diag(D^{(1)},\dots,D^{(B)})\ \bigm|\\
        &D^{(b)}:\Val_{\mathbf V^{(b)}}\to\Val_{\mathbf V^{(b)}},\\
        &D^{(b)}\text{ is bijective for }b=1,\dots,B\,\bigr\}.
    \end{aligned}
\end{equation}
\end{definition}

Block-identifiability may also permute whole blocks; we incorporate those permutations below. To characterize the largest class of transformations guaranteed to preserve every abstraction of this form, \cref{thm:maximality} compares these structured transformations with arbitrary bijections of the low-level state space.

To test whether a change of representation preserves the abstraction, we must compare the same high-level intervention before and after the change. Under a bijection $T$, a state $\mathbf y$ in $L_T$ corresponds to $T^{-1}(\mathbf y)$ in $L$, and a low-level intervention $I$ becomes $I^T$. We therefore use $\tau\circ T^{-1}$ for states and define $\omega^T$ so that $I^T$ has the same high-level meaning as $I$.

\begin{definition}[Transported high-level intervention map]\label{def:transport}
For every $I\in\Psi$, define
\begin{equation}
    \omega^T(I^T):=\omega(I).
\end{equation}
\end{definition}

Because $I\mapsto I^T$ is an order-preserving bijection, $\omega^T$ is well-defined, surjective, and order-preserving. By exactness of bijective translations and composition of exact transformations~\citep{geiger_causal_2023}, $(\tau\circ T^{-1},\omega^T):L_T\to H$ is exact. Since every $D\in\mathcal D$ is a bijection, this includes all within-block reparameterizations.

\subsection{The Maximal Class Preserving Constructiveness}
\label{sec:constructive-inv}

The family $\mathcal D$ changes coordinates within blocks, but block-identifiability may also permute whole blocks. More generally, a coordinate permutation can move the coordinates of a block to new positions. To keep each concept associated with the same group of coordinates after this change, we track where each block moves under the permutation.

\begin{definition}[Compatible permutations and transported cells]\label{def:compatible}
A permutation $\sigma$ of the low-level variables $\mathbf V_L$ is
\emph{compatible} if $\Val_{\sigma(X)}=\Val_X$ for every
$X\in\mathbf V_L$. It acts on total settings according to
$\bigl(\sigma(\mathbf x)\bigr)_{\sigma(X)}:=x_X$. For each block
$\mathbf V^{(b)}$, its image $\sigma(\mathbf V^{(b)})$ is the
corresponding \emph{transported cell}, and $\sigma_b$ denotes the
induced bijection from $\Val_{\mathbf V^{(b)}}$ to
$\Val_{\sigma(\mathbf V^{(b)})}$.
\end{definition}

Because $\sigma$ is a permutation and the original blocks partition
$\mathbf V_L$, the transported cells also partition $\mathbf V_L$,
although they may be interleaved with respect to the original partition. We now combine the two changes of representation: $D$ changes coordinates within each block, and $\sigma$ moves those coordinates to their transported cells. The family $\mathcal T$ contains all transformations of this form.

\begin{definition}[Transformation family $\mathcal T$]
\label{def:transformation-class}
The family of block-diagonal reparameterizations followed by compatible
coordinate permutations is
\[
    \mathcal T
    :=
    \{\,\sigma\circ D
    \mid
    \sigma\text{ is compatible and }D\in\mathcal D\,\}.
\]
\end{definition}

Any bijection preserves exactness when the model and its interventions are transported as in \cref{def:bijective}. Our goal is stronger: after changing the representation, each high-level variable should still be computed from one cell, and setting that cell should still be a hard intervention. Both properties hold when $T=\sigma\circ D$. If $I=\doop(\mathbf V^{(b)}{=}\mathbf i_b)$ for some block value $\mathbf i_b\in\Val_{\mathbf V^{(b)}}$, then $I^T$ sets the transported cell $\sigma(\mathbf V^{(b)})$ to the fixed value $\sigma_b(D^{(b)}(\mathbf i_b))$. For a general bijection, either property can fail: the transported intervention may depend on the current state, or a high-level variable may depend on coordinates from several cells.

A two-variable XOR transformation illustrates both failures. Let $\mathbf V=\{X_1,X_2\}$ with binary singleton blocks
$\mathbf V^{(1)}=\{X_1\}$ and $\mathbf V^{(2)}=\{X_2\}$, and let
$\tau(x_1,x_2)=(x_1,x_2)$ map them to
$Z^{(1)}$ and $Z^{(2)}$.
Consider
\[
    \begin{aligned}
        T(x_1,x_2)&=(x_1\oplus x_2,x_2),\\
        T^{-1}(y_1,y_2)&=(y_1\oplus y_2,y_2).
    \end{aligned}
\]
For the hard intervention $I=\doop(X_1{=}0)$, the transported operation is $I^T(y_1,y_2)=(y_2,y_2)$
which copies the current value of $Y_2$ into $Y_1$ rather than setting
$Y_1$ to a constant. It is therefore not a hard intervention in the
new coordinates. The transported state map also satisfies
\[
    (\tau\circ T^{-1})(y_1,y_2)
    =(y_1\oplus y_2,y_2).
\]
Thus $Z^{(1)}$ requires both $Y_1$ and $Y_2$, while $Z^{(2)}$ also
requires $Y_2$, so the two high-level variables cannot be computed from
disjoint cells. Exactness survives relative to the transported
intervention algebra, but constructiveness does not.

The XOR example shows that preserving exactness does not necessarily preserve the block-to-concept structure. We now ask which transformations preserve this structure for every constructive abstraction built on the fixed blocks. For each target SCM $H$, let $\mathcal A(L,H)$ be the set of constructive abstractions $(\tau,\omega):L\to H$ whose cell for $Z^{(b)}$ is $\mathbf V^{(b)}$ for every $b$, with $\dom(\omega)=\Psi$ and $\omega$ the intervention map induced by that alignment (\cref{ass:setup}).

\begin{theorem}[Universal maximality of block-diagonal invariance]\label{thm:maximality}
For the low-level SCM $L$ and block partition in \cref{ass:setup}, fix any bijection $T:\Val_{\mathbf{V}_L}\to\Val_{\mathbf{V}_L}$ and any compatible permutation $\sigma$. The following are equivalent:
\begin{enumerate}[(a)]
    \item For every target SCM $H$ and every $(\tau,\omega)\in\mathcal A(L,H)$, the transported pair $(\tau\circ T^{-1},\omega^T):L_T\to H$ is constructive with cells $\sigma(\mathbf V^{(b)})$ for every $b$.
    \item $T=\sigma\circ D$ for some $D\in\mathcal D$.
\end{enumerate}
\end{theorem}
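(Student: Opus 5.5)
We prove the two implications separately. The direction (b)$\Rightarrow$(a) is a direct verification, while (a)$\Rightarrow$(b) needs a well-chosen target $H$ and abstraction in $\mathcal A(L,H)$ that forces $T$ to have the claimed form.

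\textbf{(b)$\Rightarrow$(a).} Let $T=\sigma\circ D$ and fix $(\tau,\omega)\in\mathcal A(L,H)$ with cell maps $\pi_b$.

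First, the state map. For $\mathbf y\in\Val_{\mathbf V_L}$, the $b$-th block of $T^{-1}(\mathbf y)$ is $(D^{(b)})^{-1}\bigl(\sigma_b^{-1}(\Proj_{\sigma(\mathbf V^{(b)})}\mathbf y)\bigr)$. Hence
\[
(\tau\circ T^{-1})(\mathbf y)=\bigl(\pi'_b(\Proj_{\sigma(\mathbf V^{(b)})}\mathbf y)\bigr)_b,
\qquad
\pi'_b:=\pi_b\circ (D^{(b)})^{-1}\circ\sigma_b^{-1}.
\]
Each $\pi'_b$ is surjective because $\pi_b$ is surjective and the other two factors are bijections. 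So $\tau\circ T^{-1}$ is exactly the state map induced by the alignment with cells $\sigma(\mathbf V^{(b)})$ and cell maps $\pi'_b$.

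Second, the interventions. As already noted before the theorem, transporting $\doop(\mathbf V^{(b)}{=}\mathbf i_b,\ b\in A)$ by $T$ gives the hard intervention $\doop(\sigma(\mathbf V^{(b)}){=}\sigma_b D^{(b)}\mathbf i_b,\ b\in A)$. This holds because the map acts blockwise: unset blocks pass through $T^{-1}$ and then $T$ unchanged. Consequently $\Psi^T$ is exactly the algebra of hard interventions on complete transported cells. Moreover, $\omega^T$ sends such an intervention to $\doop(Z^{(b)}{=}\pi_b(\mathbf i_b))=\doop(Z^{(b)}{=}\pi'_b(\sigma_bD^{(b)}\mathbf i_b))$, which is the map $\omega_{\pi'}$ of \cref{def:constructive}.

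Finally, exactness of $(\tau\circ T^{-1},\omega^T)$ was established after \cref{def:transport}. Together these show the pair is constructive with the required cells.

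\textbf{(a)$\Rightarrow$(b).} We instantiate (a) with a maximally informative abstraction. Take $H$ with variables $Z^{(b)}$, $\Val_{Z^{(b)}}:=\Val_{\mathbf V^{(b)}}$, and $\pi_b=\id$. Define the mechanisms of $H$ as those of $L$, read blockwise, so that $H$ is $L$ regrouped into block-valued variables. Then $\tau$ is the canonical bijection $\Val_{\mathbf V_L}\cong\prod_b\Val_{\mathbf V^{(b)}}$, and complete-block interventions correspond one-to-one. Exactness is then immediate, since solutions and intervened solutions coincide under the regrouping. Hence this pair lies in $\mathcal A(L,H)$, and this class is nonempty for every $L$.

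Applying (a), $\tau\circ T^{-1}$ must have the form $\mathbf y\mapsto(\pi'_b(\Proj_{\sigma(\mathbf V^{(b)})}\mathbf y))_b$. Thus the $b$-th block of $T^{-1}(\mathbf y)$ depends only on $\Proj_{\sigma(\mathbf V^{(b)})}\mathbf y$ through a surjection $\pi'_b:\Val_{\sigma(\mathbf V^{(b)})}\to\Val_{\mathbf V^{(b)}}$. Since $\tau\circ T^{-1}$ is a bijection, the product map $\prod_b\pi'_b$ is a bijection.

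The key step is to upgrade this to injectivity of each $\pi'_b$. Suppose two values in one cell collide under $\pi'_b$. Hold the other cells fixed. The two resulting total settings are distinct but have the same image under the product map, contradicting injectivity. Since all value sets are nonempty, each $\pi'_b$ is therefore a bijection.

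Now set $E^{(b)}:=(\pi'_b)^{-1}:\Val_{\mathbf V^{(b)}}\to\Val_{\sigma(\mathbf V^{(b)})}$ and $D^{(b)}:=\sigma_b^{-1}\circ E^{(b)}$. Each $D^{(b)}$ is a bijection of $\Val_{\mathbf V^{(b)}}$, and by construction $T=\sigma\circ\Diag(D^{(1)},\dots,D^{(B)})$.

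The main obstacle is bookkeeping rather than depth. We must make sure the regrouped $H$ is a legitimate SCM and that $\dom(\omega)=\Psi$ matches \cref{ass:setup}. We must also make sure that constructiveness in (a) is read as the requirement that the state map factor through the transported cells, rather than merely that some exact transformation exists. If the definition additionally demands that $\omega^T$ be the induced map, we do not need that condition for this direction; the state-map factorization alone suffices.
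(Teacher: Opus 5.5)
Your proposal is correct and follows essentially the same route as the paper. For (b)$\Rightarrow$(a), both use the transported cell maps $\pi_b\circ(D^{(b)})^{-1}\circ\sigma_b^{-1}$ together with exactness of bijective translations; for (a)$\Rightarrow$(b), both instantiate (a) with the identity block abstraction, read off a blockwise factorization of $T^{-1}$, make each cell map a bijection by varying one transported cell with the others fixed, and set $(D^{(b)})^{-1}=\pi'_b\circ\sigma_b$ (your "regrouped $L$" is the paper's $L$ conjugated through $\tau_{\mathrm{id}}$).
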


The proof is given in \cref{app:proof-maximality}. The theorem identifies the largest family of transformations for which constructiveness is guaranteed, connecting the result directly to block-identifiability. In block-identifiability results, the permitted permutations are whole-block permutations, meaning that each original block is mapped onto another original block. Together with invertible transformations within blocks, these transformations are a special case of $\mathcal T$. The family $\mathcal T$ is broader because it also permits compatible coordinate permutations that move a block to an interleaved transported cell.

There is one important qualification: the guarantee is universal. Every transformation in $\mathcal T$ preserves every constructive abstraction built on the fixed blocks. A transformation outside $\mathcal T$ must break at least one such abstraction, but it may still preserve a particular one.
For example, take four binary variables with blocks $\textcolor{blockblue}{\{X_1,X_2\}}$ and $\textcolor{blockpink}{\{X_3,X_4\}}$, and let $\tau(\mathbf x)=(\textcolor{blockblue}{x_1\oplus x_2},\,\textcolor{blockpink}{x_3\oplus x_4})$. Consider the transformation
\[
    T_+(\mathbf x)=
    \bigl(
        \textcolor{blockblue}{x_1}\oplus\textcolor{blockpink}{x_3},\,
        \textcolor{blockblue}{x_2}\oplus\textcolor{blockpink}{x_3},\,
        \textcolor{blockpink}{x_3},\,
        \textcolor{blockpink}{x_4}
    \bigr),
\]
which mixes the two blocks. Nevertheless,
\[
    (\tau\circ T_+^{-1})(\mathbf y)
    =
    \bigl(
        \textcolor{blockblue}{y_1\oplus y_2},\,
        \textcolor{blockpink}{y_3\oplus y_4}
    \bigr).
\]
The first high-level variable is unchanged because the added value $\textcolor{blockpink}{x_3}$ appears twice and therefore disappears under XOR: $(\textcolor{blockblue}{x_1}\oplus\textcolor{blockpink}{x_3})\oplus(\textcolor{blockblue}{x_2}\oplus\textcolor{blockpink}{x_3})=\textcolor{blockblue}{x_1\oplus x_2}$. Thus $T_+\notin\mathcal T$ still preserves this particular abstraction.

This does not contradict \cref{thm:maximality}, which concerns preservation of every abstraction built on the fixed blocks. Its proof shows that a transformation outside $\mathcal T$ fails for the identity block abstraction, whose cell maps retain the complete value of each block. Preserving one particular abstraction therefore does not imply that $T\in\mathcal T$.


\begin{figure*}[t]
    \centering
    \includegraphics[width=.85\textwidth]{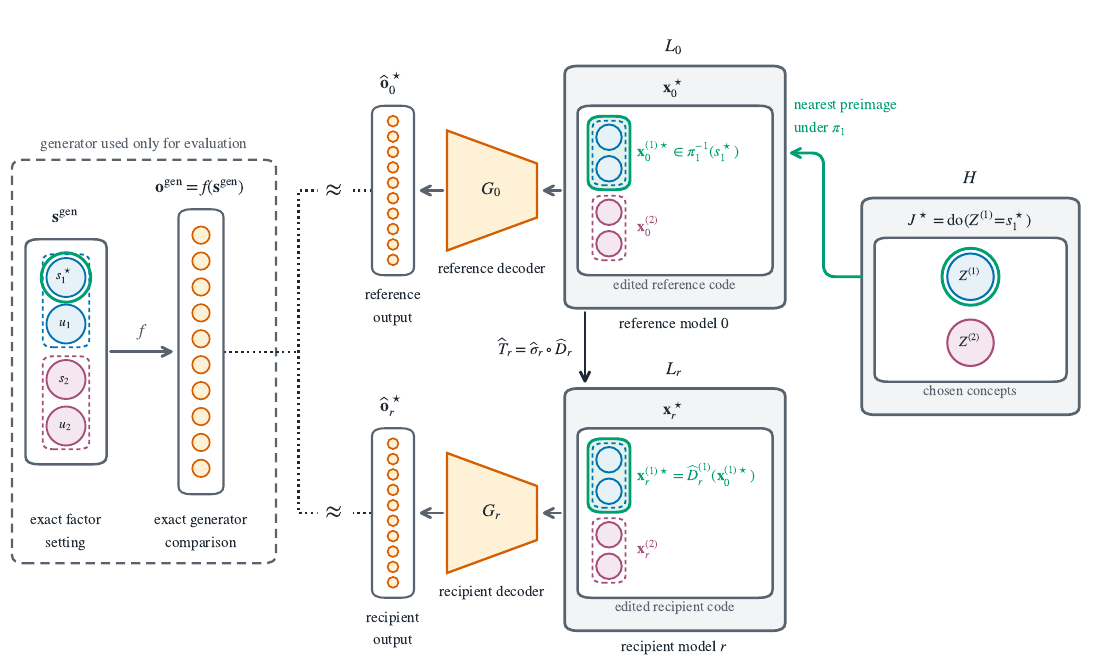}
    \caption{\textbf{Concept-directed steering and transport between independently trained additive autoencoders.} Starting from a base observation, the high-level intervention $J^\star=\doop(Z^{(1)}{=}s_1^\star)$ in $H$ is implemented in the reference model $L_0$ by editing block~1. Here, $\pi_1$ maps that block to $Z^{(1)}$, and $\mathbf x_0^{(1)\star}$ is the closest block value satisfying $\pi_1(\mathbf x_0^{(1)\star})=s_1^\star$; block~2 remains unchanged. The fitted transformation $\widehat T_r=\widehat\sigma_r\circ\widehat D_r$ transfers the edit to recipient model $L_r$, where $\widehat D_r$ maps corresponding blocks and $\widehat\sigma_r$ accounts for their ordering. The decoders $G_0$ and $G_r$ produce the edited outputs $\widehat{\mathbf o}_0^\star$ and $\widehat{\mathbf o}_r^\star$. For evaluation, the exact factor setting $\mathbf s^{\mathrm{gen}}=(s_1^\star,u_1,s_2,u_2)$ changes only $s_1$, setting it to the intervention value $s_1^\star$, and the data-generating map $f$ produces the comparison observation $\mathbf o^{\mathrm{gen}}=f(\mathbf s^{\mathrm{gen}})$. Both decoded outputs are compared with this same observation, as indicated by the dotted lines. Green marks the intervention, while blue and pink identify the two blocks throughout.}
    \label{fig:experiment-transport}
\end{figure*}

\paragraph{Implication for block-identifiable representations.}
Under \cref{ass:setup}, the transformations left unresolved by block-identifiability do not change the predicted high-level effect of an intervention. After transporting the alignment and interventions, whole-block permutations and invertible within-block reparameterizations yield the same solution set in $H$ for every corresponding intervention.
\Cref{thm:maximality} identifies the boundary of this universal guarantee: every $T\in\mathcal T$ preserves every block-aligned constructive abstraction, whereas each $T\notin\mathcal T$ breaks at least one, although a particular abstraction may survive.
Thus complete blocks, rather than their non-identifiable internal coordinates, are the stable units of intervention. When $\pi_b$ is many-to-one, the high-level intervention $\doop(Z^{(b)}{=}z)$ corresponds to every complete-block intervention $\doop(\mathbf V^{(b)}{=}\mathbf i_b)$ with $\mathbf i_b\in\pi_b^{-1}(z)$. Two abstractions whose cell maps have the same fibers in every cell differ only by a relabeling of their high-level values (\cref{cor:unique}). The abstraction does not distinguish these low-level interventions, so implementing a high-level intervention requires an additional selection rule. This ambiguity becomes relevant in \cref{sec:experiments}, where a nearest-point rule selects one representative from the fiber.

\section{Experiments
}
\label{sec:experiments}

\paragraph{Goal.}
We test whether a high-level intervention in the human-interpretable model $H$ can be translated into a block intervention in a reference low-level model $L_0$ and transported to independently trained low-level models $L_r$ while preserving its high-level effect. \Cref{fig:experiment-transport} summarizes this steering, transport, and evaluation procedure.

\paragraph{Setting.}
Synthetic factors $(s_1,u_1,s_2,u_2)$ generate observations through the nonlinear additive map
$\mathbf o=A_1\phi(s_1,u_1)+A_2\phi(s_2,u_2)$, where $\phi(s,u)=(s,u,s^2,su,u^2)$; the distribution and mixing matrix are specified in \cref{app:exp-additive}.
Following additive-decoder models~\citep{lachapelle_additive_2023}, model $r$ has an encoder $E_r(\mathbf o)=\mathbf x_r$ and a decoder $G_r(\mathbf x_r)=\widehat{\mathbf o}_r$. The four-dimensional representation $\mathbf x_r$ has two two-dimensional blocks $\mathbf x_r^{(1)},\mathbf x_r^{(2)}$. Each block is passed through its own nonlinear decoder branch, and the two outputs are added:
\[
    \widehat{\mathbf o}_r=G_r^{(1)}(\mathbf x_r^{(1)})+G_r^{(2)}(\mathbf x_r^{(2)}).
\]
We treat $\mathbf x_r$ as the state of the low-level SCM $L_r$ and the two learned blocks as its cells $\mathbf V^{(1)},\mathbf V^{(2)}$.
The high-level SCM $H$ keeps only $Z^{(1)}=s_1$ and $Z^{(2)}=s_2$; omitting $u_1,u_2$ makes each cell map $\pi_b$ many-to-one.

\paragraph{Protocol.}
We train one reference autoencoder $(E_0,G_0)$ and ten additional autoencoders $(E_r,G_r)$ that may receive an intervention transported from the reference model. After training, we freeze every model and retain for transport only those that reconstruct held-out observations with normalized mean squared error (NMSE) at most $0.01$. The architecture, training settings, data splits, and definition of NMSE are given in \cref{app:exp-additive}, and we provide the code as supplementary material.

We do not initially know which learned block corresponds to $Z^{(1)}$ and which corresponds to $Z^{(2)}$.
We therefore use fresh examples in which each observation $\mathbf o$ is paired with its known concept values $(Z^{(1)},Z^{(2)})=(s_1,s_2)$, and encode $\mathbf o$ with $E_0$.
Let $\pi_{bj}$ denote the affine map fitted from learned block $\mathbf V^{(b)}$ to concept $Z^{(j)}$.
We fit all four maps and compare the two possible one-to-one assignments: $(\pi_{11},\pi_{22})$ and $(\pi_{12},\pi_{21})$.
On held-out examples, we choose the assignment whose two maps best predict $Z^{(1)}$ and $Z^{(2)}$.
After indexing the blocks by their selected concepts, these maps become the cell maps $\pi_1$ and $\pi_2$, defining the empirical reference abstraction
\begin{equation}
    \tau(\mathbf x_0):=\bigl(\pi_1(\mathbf x_0^{(1)}),\pi_2(\mathbf x_0^{(2)})\bigr),
    \label{eq:experiment-tau}
\end{equation}
which approximates $(s_1,s_2)$. The nuisance variables $u_1,u_2$ are not used to fit these maps.

We estimate $T_r=\sigma_r\circ D_r$ by encoding the same observations with $E_0$ and $E_r$, giving paired codes $\mathbf x_0=E_0(\mathbf o)$ and $\mathbf x_r=E_r(\mathbf o)$. We fit affine maps between all pairs of blocks and choose the one-to-one pairing with the lowest held-out code error. This gives both the estimated permutation $\widehat\sigma_r$ and the fitted within-block maps $\widehat D_r^{(b)}$. We then index recipient blocks by their matched reference blocks, so that $\widehat D_r^{(b)}(\mathbf x_0^{(b)})\approx\mathbf x_r^{(b)}$. In the original output order, $\widehat T_r=\widehat\sigma_r\circ\Diag(\widehat D_r^{(1)},\widehat D_r^{(2)})$.

The purpose of $H$ is to express interventions in the concepts we understand, but the edit must ultimately be applied to $L$. The high-level intervention $\doop(Z^{(b)}{=}s_b^\star)$ does not have a unique low-level implementation because $\pi_b$ is many-to-one. Given a base observation, we encode $\mathbf x_0=E_0(\mathbf o_{\mathrm{base}})$ and keep its other block fixed while choosing $\mathbf x_0^{(b)\star}\in\pi_b^{-1}(s_b^\star)$ closest to the current block $\mathbf x_0^{(b)}$. Thus $H$ specifies the fiber, while the nearest-point rule uses information unavailable in $H$ to select one low-level representative. Replacing block $b$ gives the edited reference code $\mathbf x_0^\star$, which decodes as $\widehat{\mathbf o}_0^\star=G_0(\mathbf x_0^\star)$. We encode the same base observation as $\mathbf x_r=E_r(\mathbf o_{\mathrm{base}})$ and replace its corresponding block by
\begin{equation}
    \mathbf x_r^{(b)\star}:=\widehat D_r^{(b)}(\mathbf x_0^{(b)\star}),
    \label{eq:transported-steering-block}
\end{equation}
and decode the edited recipient code as $\widehat{\mathbf o}_r^\star=G_r(\mathbf x_r^\star)$. The closed-form affine projection used to choose $\mathbf x_0^{(b)\star}$ is given in \cref{app:exp-additive}.

The known map used to generate the synthetic observations is used only to evaluate the edits. For the intervention $\doop(Z^{(b)}{=}s_b^\star)$, we construct
\[
    \mathbf o^{\mathrm{gen}}
    =A_b\phi(s_b^\star,u_b)
     +A_{3-b}\phi(s_{3-b},u_{3-b}),
\]
where $u_b$ and $(s_{3-b},u_{3-b})$ come from the base observation. Thus only $s_b$ is changed, setting it to $s_b^\star$. We compare both the reference output $\widehat{\mathbf o}_0^\star$ and every recipient output $\widehat{\mathbf o}_r^\star$ with this same observation. We measure agreement using the NMSE between each decoded output and $\mathbf o^{\mathrm{gen}}$. Because the generator is injective, a low NMSE indicates that the edit sets $s_b$ to $s_b^\star$ while preserving its nuisance value and the other factor pair. These factor values and the generator are used only for evaluation, not to choose or transport the edit. Full experimental details and additional diagnostics are in \cref{app:exp-additive}.

\paragraph{Results.}
The reference model and four of the ten additional autoencoders pass the reconstruction threshold; the other six fail clearly and are excluded before steering. The reference cell maps identify their concepts unambiguously, and paired codes identify the corresponding blocks in every retained recipient model. Steering succeeds in the reference model and after transport to all four recipient models: every decoded observation has NMSE at most $0.012$ against the common exact generator comparison for the corresponding intervention. \Cref{tab:steering-results} reports the reconstruction and steering NMSE for every retained model.

\begin{table}[t]
\centering
\caption{\textbf{Reconstruction and concept-intervention accuracy across retained additive autoencoders.} Rows show one reference additive autoencoder and four independently trained recipient additive autoencoders, identified by training seed. ``Recon.'' is the normalized mean squared error (NMSE) between each model's decoded reconstructions and held-out synthetic input observations; only models with reconstruction NMSE at most $0.01$ are retained. The $Z^{(b)}$ columns report intervention error. An intervention on $Z^{(b)}$ is implemented as a block edit in the reference model and transported to each recipient model. Each decoded edited output is compared with the exact synthetic observation obtained by changing only $s_b$ to the intervention value while keeping $u_b$ and $(s_{3-b},u_{3-b})$ fixed. Each NMSE pools $5{,}000$ examples; lower values are better.}
\label{tab:steering-results}
\small
\begin{tabular}{@{}lccc@{}}
\toprule
Model (seed) & Recon. & $Z^{(1)}$ & $Z^{(2)}$ \\
\midrule
Reference (0)  & $0.007312$ & $0.007407$ & $0.007378$ \\
Recipient (2)  & $0.007927$ & $0.011931$ & $0.010487$ \\
Recipient (3)  & $0.007762$ & $0.009342$ & $0.010307$ \\
Recipient (5)  & $0.007825$ & $0.010572$ & $0.010985$ \\
Recipient (10) & $0.007779$ & $0.009505$ & $0.010923$ \\
\bottomrule
\end{tabular}
\end{table}

This experiment is an approximate illustration of the theory; it does not establish that the learned models form an exact constructive abstraction. The accompanying supplementary code generates the synthetic data, trains the models, and reproduces the reported results from fixed seeds.

\section{Related Works}\label{sec:related}

\paragraph{Causal abstractions and alignment.}
Prior work introduced exact transformations and constructive causal abstractions for relating causal models at different levels~\citep{rubenstein_causal_2017,beckers_abstracting_2019,geiger_causal_2023} and applied causal abstractions to neural networks~\citep{geiger_causal_2021}.
Distributed alignment search (DAS) finds a rotation of a neural representation that places each specified high-level variable in its own subspace, allowing interventions on that variable to be implemented by editing the corresponding subspace~\citep{geiger_finding_2024}.
DAS searches for an alignment to one high-level model; we instead characterize the representation changes guaranteed to preserve any constructive abstraction with the given blocks.
Other work studies how abstractions can be discovered.
Causal feature learning~\citep{chalupka_causal_2017} and unsupervised causal abstraction discovery~\citep{saulus_unsupervised_2026} seek macro-variables or high-level models, while \citet{li_identifiability_2025} use paired observations before and after unknown interventions to determine which latent variables cannot be distinguished and must therefore be represented as one higher-level variable.
Our theory assumes that the low- and high-level models, blocks, and abstraction are given and asks which changes of representation preserve that abstraction.

\paragraph{Block identifiability.}
Independent subspace analysis (ISA) identifies latent subspaces up to their permutation and invertible transformations within them~\citep{hyvarinen_hoyer_2000,theis_towards_2006}; additive decoders give an analogous guarantee under additive structure~\citep{lachapelle_additive_2023}.
These transformations are a special case of the family $\mathcal T$ characterized by \cref{thm:maximality}.
Causal representation learning more broadly seeks to recover latent causal variables or their structure~\citep{scholkopf_towards_2021}.
Our theory starts from a given abstraction and studies whether it remains valid under the transformations left open by block identifiability.

\paragraph{Feature blocks and steering.}
For an SAE, each output coordinate of the encoder is one learned feature in our terminology. SAEs with different numbers of output coordinates need not learn the same features: a concept represented by one coordinate in an SAE with fewer output coordinates may be represented by several more specific coordinates in an SAE with more output coordinates~\citep{bricken_towards_2023,leask_sparse_2024}.
Sparse Shift Autoencoders identify one-dimensional steering directions up to permutation and rescaling under their assumptions~\citep{joshi_identifiable_2025}.
Block-sparse featurizers instead represent a concept using a multidimensional block and recover low-dimensional visual concept manifolds whose internal coordinates are defined only up to rotation~\citep{fel_structuring_2026}.
Together, these works motivate asking when steering a concept represented by a block retains its high-level causal meaning under changes of representation.

\paragraph{Approximate causal abstractions.}
Approximate causal abstraction allows the high-level model to reproduce the low-level system only up to a quantified error~\citep{beckers_approximate_2019}, and abstraction maps can be learned by enforcing consistency across interventional distributions~\citep{zennaro_abstraction_2023}.
Our maximality theorem concerns exact abstractions; the synthetic experiment measures approximate steering and transport rather than proving an approximate abstraction guarantee.

\section{Discussion}\label{sec:discussion}

\paragraph{Limitations.}
Our result assumes deterministic SCMs $L$ and $H$, a fixed block partition and its correspondence to the high-level variables, and a given exact constructive abstraction $(\tau,\omega)$.
It applies to hard interventions on complete blocks and does not show how these assumptions can be learned from data.
Our synthetic experiment studies a controlled and approximate setting: the additive architecture matches the synthetic generator, concept-labelled observations are used to fit the cell maps $\pi_b$, observations encoded by different additive autoencoders are used to fit transformations between their representations, and only autoencoders that reconstruct accurately are retained.
The experiment therefore illustrates concept-directed steering and its transport in a simplified setting, but it does not establish an exact abstraction or provide a general method for discovering one.

\paragraph{Future work.}
The main open problems are to recover the blocks and $(\tau,\omega)$ under explicit identifiability assumptions; extend the maximality result to stochastic SCMs, soft interventions, and quantitatively approximate abstractions; and evaluate concept-directed steering and transport in real neural representations, where exact generator comparisons are unavailable.

\paragraph{Conclusion.}
Under a given block-aligned constructive abstraction, the ambiguity left by block-identifiability does not destroy the high-level causal interpretation.
Block-diagonal reparameterizations, up to compatible coordinate permutations, are the maximal class guaranteed to preserve every such abstraction under the transported alignment; transformations outside this class may preserve a particular abstraction but have no universal guarantee (\cref{thm:maximality}).
In a controlled synthetic setting, we illustrate this connection by translating high-level concept interventions into edits of entire learned blocks and transporting those edits between independently trained representations (\cref{sec:experiments}).

\subsection*{AI use statement}
Generative AI tools were used to assist with implementing, debugging, and testing the experimental code and to suggest limited edits to the manuscript for clarity and concision. All reported results were produced by executing the code, and the authors reviewed the AI-assisted code, experimental outputs, and manuscript edits. The authors take responsibility for the final content of this work, including all text, claims, and artifacts produced with the aid of generative AI.

\subsection*{Acknowledgments}
The authors thank Shruti Joshi and Théo Saulus for their useful comments and suggestions regarding causal abstractions and practical identifiability. P.R. acknowledges his membership in the European Laboratory for Learning and Intelligent Systems (ELLIS) PhD program and thanks the International Max Planck Research School for Intelligent Systems (IMPRS-IS) for its support. This work was supported by the German Federal Ministry of Education and Research (BMBF) through the T\"ubingen AI Center, FKZ: 01IS18039A, and by a grant from Coefficient Giving to Aaron Mueller. W.B. acknowledges financial support from an Emmy Noether Grant funded by the German Research Foundation (DFG) under grant no.\ BR 6382/1-1, and from the Open Philanthropy Foundation, funded by the Good Ventures Foundation. W.B. is a member of the Machine Learning Cluster of Excellence, EXC number 2064/1 -- Project number 390727645. This research also used compute resources at the T\"ubingen Machine Learning Cloud, DFG FKZ INST 37/1057-1 FUGG. Finally, we thank the organisers and participants of the Fourth Bellairs Workshop on Causality, held at the McGill University Bellairs Research Institute from 14--21 February 2025, for sharing their knowledge about causal abstractions and mechanistic interpretability, in particular Atticus Geiger and Thomas Icard.

\bibliography{references}

\clearpage
\appendix
\thispagestyle{empty}
\onecolumn
\aistatstitle{When Do Learned Blocks Preserve Causal Meaning?\\ Block Identifiability and Constructive Causal Abstractions: Supplementary Materials}

\section{Terminology}\label{app:terminology}

\cref{tab:terminology} collects the terms the paper builds on with their notation, a one-line reading, and the place each is defined.

\begingroup
\footnotesize
\begin{longtable}{@{}>{\raggedright\arraybackslash}p{0.19\textwidth}>{\raggedright\arraybackslash}p{0.12\textwidth}>{\raggedright\arraybackslash}p{0.39\textwidth}>{\raggedright\arraybackslash}p{0.17\textwidth}@{}}
\caption{Core terminology and notation used in the paper.}\label{tab:terminology}\\
\toprule
Term & Notation & Meaning & Defined in \\
\midrule
\endfirsthead
\toprule
Term & Notation & Meaning & Defined in \\
\midrule
\endhead
Representation (feature vector) & $\mathbf{x}=g(\mathbf{o})$ & Featurizer output whose coordinates form $\mathbf V_L$ & \cref{sec:background} \\
Activation vector & $\mathbf o$ & Activation read from the network and supplied to the featurizer $g$ & \cref{sec:background} \\
Featurizer & $g$ & Maps network activations to $\mathbf x$; examples include SAEs and block-sparse featurizers & \cref{sec:background} \\
Learned feature & $X\in\mathbf V_L$ & Coordinate variable of the featurizer output & \cref{sec:background} \\
Concept & $Z^{(b)}\in\mathbf V_H$ & High-level variable, possibly encoded across several learned coordinates & \cref{sec:background} \\
Low-level model & $L$, $\mathbf V_L$ & Representation-level SCM on the featurizer output, whose coordinates carry the block partition & \cref{sec:background,ass:setup} \\
High-level model & $H$, $\mathbf V_H$ & SCM over the human-interpretable concepts & \cref{sec:background} \\
Deterministic SCM & $M$ & Signature $\Sigma$ plus mechanisms $\{F_X\}$; $\Solve(M)$ are its fixed points & \cref{sec:background}; \citet{geiger_causal_2023} \\
Hard intervention & $I$ (on $L$), $J$ (on $H$) & Replaces the mechanisms of a variable set by constants; identified with its overwrite map; the empty intervention is the identity & \cref{sec:background} \\
Intervention algebra & $\Psi$ & Composition algebra isomorphic to hard interventions; here, hard interventions on zero or more complete blocks & \cref{sec:background}; \cref{ass:setup} \\
Exact transformation & $(\tau,\omega)$ & State and intervention maps commuting with $\Solve$ & \cref{eq:exact}; \citet{rubenstein_causal_2017,geiger_causal_2023} \\
Bijective translation & $T:L\to L_T$ & Coordinate change with transported interventions $I^T=T\circ I\circ T^{-1}$; exact relative to the transported intervention algebra & \cref{def:bijective}; \citet{geiger_causal_2023} \\
Constructive abstraction & --- & Exact transformation whose state and intervention maps are induced by an alignment & \cref{def:constructive}; \citet{beckers_abstracting_2019,geiger_causal_2023} \\
Block partition & $\mathbf{V}^{(b)}$ & Disjoint grouping of the low-level variables into $B$ blocks & \cref{eq:blocks} \\
Alignment cell, cell map & $\mathbf V^{(b)}$, $\pi_b$ & The block from which $Z^{(b)}$ is computed and the surjection producing its value & \cref{ass:setup} \\
Block-diagonal reparameterization & $D\in\mathcal D$ & Bijection acting within each block, mixing none & \cref{def:reparam} \\
Transformation family & $T\in\mathcal T$ & Maps $T=\sigma\circ D$ with compatible $\sigma$ and $D\in\mathcal D$ & \cref{def:transformation-class,thm:maximality} \\
Transported model & $L_T$ & Mechanisms conjugated through $T$, so $T:L\to L_T$ is a bijective translation & \cref{def:bijective} \\
Compatible permutation, transported cells & $\sigma$, $\sigma(\mathbf V^{(b)})$ & Permutation of coordinates with matching value domains, carrying each block to a possibly interleaved cell; whole-block permutations are the special case & \cref{def:compatible} \\
Fibers & $\pi_b^{-1}(z)$ & The low-level cell values mapped to the same high-level value; equal fibers make two abstractions differ by a relabeling & \cref{sec:constructive-inv,cor:unique} \\
Block-identifiability & --- & Recovery of blocks up to their permutation and invertible within-block maps; ISA gives the linear within-subspace case & \cref{sec:intro} \\
\bottomrule
\end{longtable}
\endgroup

\section{Proofs}\label{app:proofs}

This section contains the proof of the main theorem and a secondary corollary.

\subsection{Universal maximality of block-diagonal invariance}\label{app:proof-maximality}

\begin{proof}[Proof of \cref{thm:maximality}]
\emph{(b)$\Rightarrow$(a).}
Assume $T=\sigma\circ D$ with $D=\Diag(D^{(1)},\ldots,D^{(B)})$, and fix a target $H$ and $(\tau,\omega)\in\mathcal A(L,H)$.
Constructiveness of the transported pair requires two properties: an alignment with cells $\sigma(\mathbf V^{(b)})$ inducing $\tau\circ T^{-1}$, and exactness of $(\tau\circ T^{-1},\omega^T):L_T\to H$.

\smallskip
\noindent\emph{Step 1: the transported alignment.}
Let $\pi_b:\Val_{\mathbf V^{(b)}}\twoheadrightarrow\Val_{Z^{(b)}}$ be the original cell maps, so
\begin{equation}
\tau(\mathbf x)=\bigl(\pi_b(\mathbf x_{\mathbf V^{(b)}})\bigr)_{b=1}^B.
\end{equation}
For $\mathbf y=T(\mathbf x)$, $\sigma^{-1}$ moves the value in $\sigma(\mathbf V^{(b)})$ back to $\mathbf V^{(b)}$, and $(D^{(b)})^{-1}$ undoes the within-block change of coordinates:
\begin{equation}
(T^{-1}(\mathbf y))_{\mathbf V^{(b)}}=(D^{(b)})^{-1}\!\left(\sigma_b^{-1}(\mathbf y_{\sigma(\mathbf V^{(b)})})\right).
\end{equation}
Define the cell map on the transported cell $\sigma(\mathbf V^{(b)})$ by
\begin{equation}
\pi_b^T:=\pi_b\circ(D^{(b)})^{-1}\circ\sigma_b^{-1}.
\end{equation}
The cells $\sigma(\mathbf V^{(b)})$ partition $\mathbf V_L$ (\cref{def:compatible}); each $\pi_b^T$ is surjective because $\pi_b$ is and the remaining factors are bijective; and substituting the block formula for $T^{-1}$ into $\tau$ gives
\begin{equation}
(\tau\circ T^{-1})(\mathbf y)=\bigl(\pi_b^T(\mathbf y_{\sigma(\mathbf V^{(b)})})\bigr)_{b=1}^B,
\end{equation}
the required alignment-induced form.

\smallskip
\noindent\emph{Step 2: exactness.}
Let $I\in\dom(\omega)=\Psi$ set the complete blocks indexed by $S\subseteq\{1,\ldots,B\}$ to values $\mathbf i_b$.
Its conjugate $I^T$ sets each transported cell $\sigma(\mathbf V^{(b)})$, $b\in S$, to
\begin{equation}
\mathbf i_b^T:=\sigma_b(D^{(b)}(\mathbf i_b)),
\end{equation}
and the transported cell map assigns this cell value the same macro-value:
\begin{equation}
\pi_b^T(\mathbf i_b^T)=\pi_b(\mathbf i_b).
\end{equation}
Hence $\omega^T$, as defined in \cref{def:transport}, coincides with the intervention map induced by the transported alignment (\cref{def:constructive}).
Since $T:L\to L_T$ is a bijective translation (\cref{def:bijective}), and bijective translations are exact~\citep{geiger_causal_2023},
\begin{equation}
\Solve(L_T^{I^T})=T(\Solve(L^I)),
\end{equation}
so
\begin{align}
(\tau\circ T^{-1})(\Solve(L_T^{I^T}))
&=\tau(\Solve(L^I))\\
&=\Solve(H^{\omega(I)})\\
&=\Solve(H^{\omega^T(I^T)}),
\end{align}
using the transported-solution equation, exactness of $(\tau,\omega)$, and $\omega^T(I^T)=\omega(I)$.
The transported pair is exact and alignment-induced, hence constructive, proving (a).

\emph{(a)$\Rightarrow$(b).}
Fix the compatible permutation $\sigma$ from the theorem statement.
We construct a block-diagonal bijection $D$ with $T=\sigma\circ D$ via the identity block abstraction: a coarse abstraction may discard the information needed to recover $T$, whereas the identity abstraction retains the complete value of every block.

\smallskip
\noindent\emph{Step 1: the identity abstraction lies in the quantified family.}
Define
\begin{equation}
\tau_{\mathrm{id}}(\mathbf x)=\bigl(\mathbf x_{\mathbf V^{(b)}}\bigr)_{b=1}^B,
\end{equation}
which packages each complete block as one product-valued macro-variable and loses no information.
Let $H_{\mathrm{id}}$ be the SCM with mechanisms conjugated through $\tau_{\mathrm{id}}$ (\cref{def:bijective}), with interventions transported accordingly.
The pair is a bijective translation, hence exact, and is induced by the cells $\mathbf V^{(b)}$ with identity cell maps, so it lies in $\mathcal A(L,H_{\mathrm{id}})$; because condition (a) ranges over every target $H$, it applies to this abstraction.

\smallskip
\noindent\emph{Step 2: blockwise factorization of $T^{-1}$.}
Condition (a) says $\tau_{\mathrm{id}}\circ T^{-1}$ is constructive under the fixed transported cells $\sigma(\mathbf V^{(b)})$, so it supplies surjective cell maps
\begin{equation}
\pi_b^T:\Val_{\sigma(\mathbf V^{(b)})}\twoheadrightarrow\Val_{\mathbf V^{(b)}}
\end{equation}
whose outputs are the complete original block values.
Equating the $b$-th component of the alignment-induced state map with the $b$-th component of $\tau_{\mathrm{id}}\circ T^{-1}$ gives
\begin{equation}
(T^{-1}(\mathbf y))_{\mathbf V^{(b)}}=\pi_b^T(\mathbf y_{\sigma(\mathbf V^{(b)})})
\end{equation}
for every $b$: each output block of $T^{-1}$ depends only on its assigned transported input block.

\smallskip
\noindent\emph{Step 3: the block factors are bijective.}
Each $\pi_b^T$ is surjective as a cell map in the transported alignment.
For injectivity, suppose $\mathbf u\neq\mathbf u'$ but $\pi_b^T(\mathbf u)=\pi_b^T(\mathbf u')$; completing $\mathbf u$ and $\mathbf u'$ to total states $\mathbf y\neq\mathbf y'$ that agree on every other transported cell, the component equations give $T^{-1}(\mathbf y)=T^{-1}(\mathbf y')$, contradicting injectivity of $T^{-1}$.
Hence every $\pi_b^T$ is bijective.

\smallskip
\noindent\emph{Step 4: assembling $D$.}
With the bijection $\sigma_b:\Val_{\mathbf V^{(b)}}\to\Val_{\sigma(\mathbf V^{(b)})}$, define
\begin{equation}
(D^{(b)})^{-1}:=\pi_b^T\circ\sigma_b:\Val_{\mathbf V^{(b)}}\to\Val_{\mathbf V^{(b)}},
\end{equation}
and let $D^{-1}:=\Diag((D^{(1)})^{-1},\ldots,(D^{(B)})^{-1})$.
Comparing $D^{-1}\circ\sigma^{-1}$ with $T^{-1}$ on each block:
\begin{align}
((D^{-1}\circ\sigma^{-1})(\mathbf y))_{\mathbf V^{(b)}}
&=(D^{(b)})^{-1}\!\left(\sigma_b^{-1}(\mathbf y_{\sigma(\mathbf V^{(b)})})\right)\\
&=\pi_b^T(\mathbf y_{\sigma(\mathbf V^{(b)})})\\
&=(T^{-1}(\mathbf y))_{\mathbf V^{(b)}},
\end{align}
by the definition of $(D^{(b)})^{-1}$ and the block-factorization equation of Step~2.
The maps agree on every block, so $T^{-1}=D^{-1}\circ\sigma^{-1}$ and $T=\sigma\circ D$, which is condition (b).
Exactness is not used in this algebraic factorization; it is needed in Step~1 to place the identity abstraction in the family quantified over by condition (a).
\end{proof}

\subsection{Uniqueness up to macro value relabeling}\label{app:proof-unique}

Two surjective cell maps with the same fibers identify the same low-level values and therefore differ only by a relabeling of their high-level values. The following corollary applies this observation to abstractions related by a block reparameterization.

\begin{cor}[Uniqueness up to macro value relabeling]\label{cor:unique}
Let $L_2=(L_1)_D$ for $D\in\mathcal D$.
Let $(\tau_1,\omega_1):L_1\to H_1$ and $(\tau_2,\omega_2):L_2\to H_2$ be constructive abstractions with the same cells $\mathbf V^{(b)}$ and cell maps $\pi_{1,b}$, $\pi_{2,b}$, and suppose that for every $b$ the fibers of $\pi_{2,b}$ and of $\pi_{1,b}\circ(D^{(b)})^{-1}$ coincide on $\Val_{\mathbf V^{(b)}}$.
Then there is a unique blockwise bijection $\phi=(\phi_b)_b$, $\phi_b:\Val_{Z^{(b)}_1}\to\Val_{Z^{(b)}_2}$, with $\tau_2=\phi\circ\tau_1\circ D^{-1}$ and $\omega_2(I^D)=\phi\circ\omega_1(I)\circ\phi^{-1}$ for every $I\in\Psi$, and $H_1$ and $H_2$ have $\phi$-related solution sets under every intervention in $\omega_1(\Psi)$.
If moreover $\pi_{2,b}=\pi_{1,b}\circ(D^{(b)})^{-1}$ for every $b$, then $\phi=\id$ and the two macro-models have the same solution sets under every such intervention.
\end{cor}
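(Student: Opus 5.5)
The plan is to reduce everything to a fiber-level statement about cell maps on each block. Define $\rho_b:=\pi_{1,b}\circ(D^{(b)})^{-1}:\Val_{\mathbf V^{(b)}}\to\Val_{Z_1^{(b)}}$. By Step~1 of the (b)$\Rightarrow$(a) direction of \cref{thm:maximality}, with $\sigma=\id$, $\rho_b$ is exactly the transported cell map $\pi_{1,b}^D$. Hence $(\tau_1\circ D^{-1},\omega_1^D):L_2\to H_1$ is constructive with cells $\mathbf V^{(b)}$ and cell maps $\rho_b$. Both $\rho_b$ and $\pi_{2,b}$ are surjections from the same set $\Val_{\mathbf V^{(b)}}$. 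By hypothesis they have the same fibers, so they induce the same partition of $\Val_{\mathbf V^{(b)}}$.

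The first step is to build $\phi_b$. For $z\in\Val_{Z_1^{(b)}}$, pick any $\mathbf u\in\rho_b^{-1}(z)$, which is nonempty by surjectivity, and set $\phi_b(z):=\pi_{2,b}(\mathbf u)$. This is well defined because $\rho_b^{-1}(z)$ is a fiber of $\pi_{2,b}$. It is injective because distinct $\rho_b$-fibers are distinct $\pi_{2,b}$-fibers, and surjective because $\pi_{2,b}$ is. It satisfies $\pi_{2,b}=\phi_b\circ\rho_b$. Uniqueness follows because $\rho_b$ is surjective, so this identity pins down $\phi_b$ on all of $\Val_{Z_1^{(b)}}$. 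Taking products gives $\tau_2=\phi\circ\tau_1\circ D^{-1}$, using the alignment-induced form of $\tau_1\circ D^{-1}$.

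The second step handles interventions. For $I=\doop(\mathbf V^{(b)}{=}\mathbf i_b,\ b\in S)$, the conjugate $I^D$ sets block $b$ to $D^{(b)}(\mathbf i_b)$. So $\omega_2(I^D)=\doop(Z_2^{(b)}{=}\pi_{2,b}(D^{(b)}(\mathbf i_b)))=\doop(Z_2^{(b)}{=}\phi_b(\pi_{1,b}(\mathbf i_b)))$. Identifying hard interventions with overwrite maps, conjugating $\omega_1(I)=\doop(Z_1^{(b)}{=}\pi_{1,b}(\mathbf i_b))$ by the blockwise bijection $\phi$ gives exactly this overwrite map. A coordinatewise check shows $\phi\circ\omega_1(I)\circ\phi^{-1}=\omega_2(I^D)$.

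The third step compares solution sets. Exactness of $(\tau_2,\omega_2)$, the bijective-translation identity $\Solve(L_2^{I^D})=D(\Solve(L_1^I))$, and exactness of $(\tau_1,\omega_1)$ give
\[
\Solve(H_2^{\omega_2(I^D)})=\tau_2(\Solve(L_2^{I^D}))=\phi\bigl(\tau_1(\Solve(L_1^I))\bigr)=\phi\bigl(\Solve(H_1^{\omega_1(I)})\bigr).
\]
Every intervention in $\omega_1(\Psi)$ has the form $\omega_1(I)$, so this covers all of them. If $\pi_{2,b}=\rho_b$, then $\phi_b=\id$ by uniqueness. In that case $Z_1^{(b)}$ and $Z_2^{(b)}$ share a value set and the solution sets coincide.

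The main obstacle is bookkeeping rather than depth. One has to check that $\omega_2(I^D)$ depends only on $I$ through the induced map, which holds because $\dom(\omega_2)=\Psi^D$ and $I\mapsto I^D$ is a bijection. One also has to check that the phrase ``$\phi$-related solution sets'' is interpreted as the displayed identity; I would state that interpretation explicitly.
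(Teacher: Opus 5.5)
Your proposal is correct and follows essentially the same route as the paper: you transport the first abstraction to $L_2$ via the $\sigma=\id$ case of \cref{thm:maximality}, build each $\phi_b$ from the common fibers (with uniqueness from surjectivity of $\pi_{1,b}\circ(D^{(b)})^{-1}$), verify the intervention identity coordinatewise, and chain the two exactness equations through $\Solve(L_2^{I^D})=D(\Solve(L_1^I))$. Your explicit injectivity check for $\phi_b$ and your remark that $\omega_2$ is evaluated on $I^D$ through the bijection $I\mapsto I^D$ are bookkeeping details that the paper leaves implicit.
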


\begin{proof}[Proof of \cref{cor:unique}]
By \cref{thm:maximality} with $\sigma=\id$, $(\tau_1\circ D^{-1},\omega_1^D)$ is constructive $L_2\to H_1$ with cell maps $\pi_{1,b}\circ(D^{(b)})^{-1}$.
Fix $b$: this map and $\pi_{2,b}$ are surjections of $\Val_{\mathbf V^{(b)}}$ with the same fibers, so $\phi_b(z):=\pi_{2,b}(\mathbf u)$ for any $\mathbf u\in(\pi_{1,b}\circ(D^{(b)})^{-1})^{-1}(z)$ is a well-defined bijection, and it is the only map with $\pi_{2,b}=\phi_b\circ\pi_{1,b}\circ(D^{(b)})^{-1}$ because the latter map is onto.
Hence $\tau_2=\phi\circ\tau_1\circ D^{-1}$, and by the definition of the induced intervention map (\cref{def:constructive}) the intervention $I^D$ setting the cells $S$ to $D^{(b)}(\mathbf i_b)$ is sent by $\omega_2$ to $\doop(Z^{(b)}_2{=}\phi_b(\pi_{1,b}(\mathbf i_b)),\,b\in S)$, which is $\phi\circ\omega_1(I)\circ\phi^{-1}$ because $\phi$ acts coordinatewise.
Exactness of both pairs then gives, for every $I\in\Psi$, $\phi(\Solve(H_1^{\omega_1(I)}))=\phi(\tau_1(\Solve(L_1^I)))=\tau_2(\Solve(L_2^{I^D}))=\Solve(H_2^{\omega_2(I^D)})$, the empty intervention included; the last claim follows because equal cell maps force $\phi_b=\id$.
\end{proof}

\section{Concept-Directed Steering and Transport Details}\label{app:exp-additive}

\paragraph{Generator and model.}
Each example draws $s_1,u_1,s_2,u_2$ independently from $\mathrm{Uniform}[-1,1]$, lifts each pair by $\phi(s,u)=(s,u,s^2,su,u^2)^\top$, and forms
\begin{equation}
    \mathbf o=A_1\phi(s_1,u_1)+A_2\phi(s_2,u_2)\in\mathbb R^{10},
\end{equation}
where $A=[A_1\ A_2]\in\mathbb R^{10\times10}$ is the orthogonal factor of one seeded random matrix and is fixed across all splits and models.
The construction satisfies the reference-side sufficient-nonlinearity condition of \citet{lachapelle_additive_2023}; the exact learned-side conditions of that result are not claimed for finite trained networks.
For model $r$, the encoder $E_r$ has widths $10,64,64,4$, and each branch of the decoder $G_r$ has widths $2,64,64,10$, with $\tanh$ hidden activations and linear outputs.
Training minimizes reconstruction mean squared error (MSE) with Adam, using batch size $1024$, learning rate $10^{-3}$, and weight decay $5\times10^{-4}$.
Training runs for at most $2000$ epochs and stops after $200$ epochs without a validation-MSE improvement of at least $10^{-7}$; the best validation checkpoint is retained.
All runs use CPU-only local execution with four PyTorch threads.
No factor label enters autoencoder training or checkpoint selection.

\paragraph{Data splits and fitted maps.}
The experiment uses $20{,}000/2{,}000$ observations for autoencoder training and validation and $5{,}000$ held-out observations for reconstruction eligibility.
The reference cell maps use separate calibration, validation, and test splits of sizes $2{,}000$, $2{,}000$, and $5{,}000$.
The cross-model maps use $5{,}000/2{,}000$ paired, unlabelled observations for fitting and validation.
For each of $5{,}000$ further base observations, we independently sample one intervention value $s_1^\star$ and one intervention value $s_2^\star$ from $\mathrm{Uniform}[-1,1]$.
This gives $5{,}000$ steering cases per concept, or $10{,}000$ interventions in total; each corresponding edit is evaluated in the reference model and four retained recipient models.

For each possible pairing of reference block $b$ and concept $j$, an affine candidate map
\begin{equation}
    \pi_{bj}(\mathbf x_0^{(b)})=\mathbf w_{bj}^\top\mathbf x_0^{(b)}+c_{bj}
\end{equation}
is fitted by least squares. The one-to-one assignment with the highest validation $R^2$ is retained, and its two maps become $\pi_1$ and $\pi_2$.
For every possible reference--recipient block pair, an affine map is fitted in standardized coordinates by least squares.
The one-to-one pairing with the lowest worst-block validation NMSE estimates $\widehat\sigma_r$ and is retained.
After indexing each recipient block by its matched reference block, the selected within-block map $\widehat D_r^{(b)}$ satisfies $\widehat D_r^{(b)}(\mathbf x_0^{(b)})\approx\mathbf x_r^{(b)}$.
This second fit sees paired codes of the same observations but no factor values.

\paragraph{Selecting a point in the fiber.}
Let $\mathbf x_0^{(b)}$ be the intervened block of the reference code for the base observation.
For the affine $\pi_b$, the closest point in its fiber over $s_b^\star$ is
\begin{equation}
    \mathbf x_0^{(b)\star}
    =\mathbf x_0^{(b)}
     +\frac{s_b^\star-\pi_b(\mathbf x_0^{(b)})}{\lVert\mathbf w_b\rVert^2}\mathbf w_b,
    \qquad \pi_b(\mathbf x_0^{(b)\star})=s_b^\star.
    \label{eq:steering-preimage}
\end{equation}
Replacing this block produces the reference code $\mathbf x_0^\star$ and decoded observation $\widehat{\mathbf o}_0^\star=G_0(\mathbf x_0^\star)$.
Recipient model $r$ receives $\mathbf x_r^{(b)\star}=\widehat D_r^{(b)}(\mathbf x_0^{(b)\star})$, producing $\mathbf x_r^\star$ and $\widehat{\mathbf o}_r^\star=G_r(\mathbf x_r^\star)$.
This Euclidean projection is a convenient way to choose one representative of the fiber; it is not a unique inverse of $\pi_b$.

\paragraph{Evaluation.}
For each intervention, we construct one exact generator observation by replacing only the intervened concept value in the factors of the base observation:
\begin{equation}
    \mathbf o^{\mathrm{gen}}
    =A_b\phi(s_b^\star,u_b)
     +A_{3-b}\phi(s_{3-b},u_{3-b}),
\end{equation}
where $u_b$ and the second factor pair come from the base observation.
The same $\mathbf o^{\mathrm{gen}}$ is used to evaluate the complete reference and recipient observations by computing the NMSE between $\widehat{\mathbf o}_r^\star$ and $\mathbf o^{\mathrm{gen}}$ for $r=0$ and every retained recipient.
Because the generator is injective and $\phi$ contains the original factor coordinates, this common comparison tests whether the edit sets $s_b$ to $s_b^\star$ while preserving its nuisance value and the other factor pair.

For predictions $\widehat{\mathbf y}_1,\ldots,\widehat{\mathbf y}_N$ and comparisons $\mathbf y_1,\ldots,\mathbf y_N$, we define
\begin{equation}
    \operatorname{NMSE}(\widehat{\mathbf y},\mathbf y)
    :=\frac{\sum_{n=1}^N\lVert\widehat{\mathbf y}_n-\mathbf y_n\rVert^2}
    {\sum_{n=1}^N\lVert\mathbf y_n-\overline{\mathbf y}\rVert^2},
    \qquad
    \overline{\mathbf y}:=\frac{1}{N}\sum_{n=1}^N\mathbf y_n.
\end{equation}
For cross-model map validation, each recipient coordinate is standardized using the corresponding training-set mean and standard deviation before applying this definition.

The thresholds were fixed before the run: reconstruction NMSE at most $0.01$; reference-readout and decoded-concept $R^2$ at least $0.95$; cross-model map NMSE at most $0.05$; untargeted $s$ and $u$ $R^2$ at least $0.95$; generator-validity residual at most three times the reconstruction floor; and final common-comparison NMSE at most $0.02$.

\paragraph{Numerical results.}
The reference model reconstructs with NMSE $0.007312$.
Four of ten additional autoencoders pass the same gate, with NMSE between $0.007762$ and $0.007927$; the other six lie between $0.099071$ and $0.100609$.
The two selected reference cell maps have test $R^2$ between $0.999602$ and $0.999618$, while each block has no predictive power for the other concept.
The selected cross-model maps have validation NMSE between $0.001985$ and $0.005839$; the alternative block pairings have NMSE near $1$.

All retained models satisfy the reconstruction criterion, and every predeclared calibration, map-validation, steering, preservation, validity, and common-observation check passes.

\end{document}